\documentclass{article}
\usepackage{ijcai26}
\usepackage{macro}

\usepackage{times}
\usepackage{soul}
\usepackage{url}
\usepackage[hidelinks]{hyperref}
\usepackage[utf8]{inputenc}
\usepackage[small]{caption}
\usepackage{graphicx}
\usepackage{amsmath}
\usepackage{amsthm}
\usepackage{booktabs}
\usepackage{algorithm}
\usepackage{algorithmic}
\usepackage{amssymb}
\usepackage{subcaption}
\usepackage{makecell}
\usepackage{placeins}
\usepackage[switch]{lineno}

\newtheorem{theorem}{Theorem}

\title{Lookahead Branching for Neural Network Verification}

\author{
    Liam Davis$^1$ \and
    Duo Zhou$^2$ \and
    Huan Zhang$^2$ \and
    Guy Katz$^3$ \and
    Clark Barrett$^4$ \and
    Haoze Wu$^1$
    \affiliations
    $^1$Amherst College \\
    $^2$University of Illinois Urbana-Champaign \\
    $^3$Hebrew University of Jerusalem \\
    $^4$Stanford University
    \emails
    ljdavis27@amherst.edu,
    duozhou2@illinois.edu,
    huan@huan-zhang.com,
    g.katz@mail.huji.ac.il,
    barrettc@stanford.edu,
    hwu@amherst.edu
}

\begin{document}

\maketitle

\begin{abstract}
In this work, we investigate the effect of lookahead branching strategies in neural network verification. We present a general recipe to integrate lookahead into any branch-and-bound verifier and demonstrate how one of the current state-of-the-art branching heuristics, FSB, can be viewed as a special instantiation of the lookahead branching strategy. We also describe how, in addition to improving the quality of branching decisions, lookahead can generate additional lemmas that accelerate verification. We instantiate the method in two representative branch-and-bound-based verifiers (\marabou and \abcrown), and demonstrate that lookahead leads to consistent speedups in verification time and up to $57\%$ more solved instances. Code is available at \url{https://github.com/ai-ar-research/lookahead-branching}.
\end{abstract}

\section{Introduction}

Deep neural networks (DNNs) have become state-of-the-art solutions in various domains~\cite{sallab2017deep,mnih2013playing,he2015delving}. To ensure the deployment of DNN-based systems in safety critical domains, there has been significant effort from the formal methods and machine learning communities on developing scalable formal verification techniques that can reason about the behaviors of a neural network~\cite{katz2017reluplex,singh2019abstract,zhang2018efficient,wang2021beta}. State-of-the-art complete neural network verifiers are based on branch-and-bound (\bab), which involves performing case splitting on non-linear activation functions (e.g., ReLU) and analyzing the cases using an incomplete verifier; if the analysis is inconclusive, further case splits are recursively performed. 

The branching heuristic, the strategy to choose which case to split on, has a significant impact on verification efficiency. Ideally, the branching heuristic should lead to easier subproblems. Failing to do so, especially at the top of the search tree, can result in duplicated verification efforts and increased verification time. A number of branching heuristics have been developed for neural network verification~\cite{katz2017reluplex,wu2020parallelization,bunel2020branch,de2021scaling,wu2022efficient}. Most heuristics aim to make a decision \emph{quickly} by leveraging local information (e.g., variable bounds) gathered during the solving process. This increases the risk of ineffective branching, as decisions are made without evaluating the long-term impact of splits. Recent heuristics have shown it is beneficial to spend more effort making a branching decision by simulating branching on candidate neurons and evaluating its effect~\cite{de2021scaling}. This approach has culminated in Filtered Smart Branching (FSB), the standard branching heuristic in recent work.

Motivated by the success of FSB, we systematically study a general branching strategy that spends \emph{significant} effort making branching decisions. We adopt the terminology in formal methods and call this approach \emph{lookahead}~\cite{heule2009lookahead}. Lookahead involves simulating potential branching decisions by performing multiple splits to measure downstream effects. By analyzing the impact of each simulated branch, lookahead uses more information to make branching decisions. We present lookahead as a template algorithm with several tunable parameters such as the preselect strategy, lookahead depth, and evaluation metric that can be instantiated in a solver-specific manner. We discuss the choices for these parameters and how FSB can be viewed as a special instantiation of the lookahead procedure. In addition to informing the branching decision, lookahead might also discover new information, such as tightened variable bounds. We show this information can be used to derive valid lemmas and potentially prune the search space. We instantiate lookahead in two distinct \bab-verifiers, \marabou~\cite{katz2019marabou,wu2024marabou} and \abcrown~\cite{wang2021beta,xu2020automatic,zhang2022general,zhou2024scalable}, and demonstrate that lookahead can improve both tools. 

An overview of the lookahead workflow is presented in Figure~\ref{fig:overview}. At a given search state when branching is required, a set of split candidates is pre-selected and lookahead simulates branching on each candidate up to a certain depth. The outcome of each simulation is a score along with a number of tightened bounds, which might entail that certain unstable neurons are fixed to particular activation phases. In the end, lookahead outputs the next neuron to split on and a set of lemmas discovered during the lookahead simulations.

\begin{figure*}[t]
    \centering
    \includegraphics[width=0.9\textwidth]{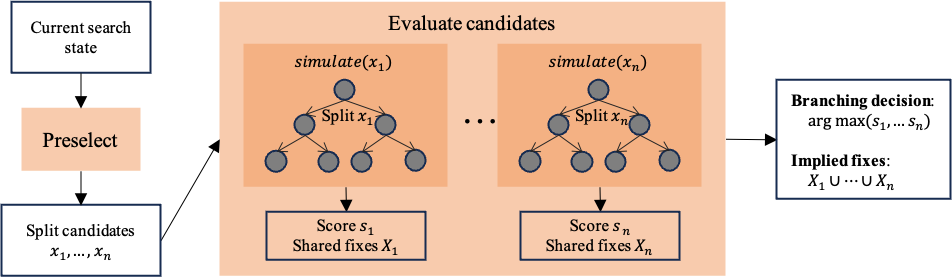}
    \caption{Visual overview of the lookahead procedure}
    \label{fig:overview}
\end{figure*}

The rest of the paper is organized as follows: Section~\ref{sec:related} reviews related work. Section~\ref{sec:prelim} provides background on neural network verification. Section~\ref{sec:method} presents the general lookahead approach and concrete instantiations in \marabou and \abcrown. Section~\ref{sec:eval} provides an evaluation of lookahead branching, comparing it to existing heuristics in \marabou and \abcrown. Finally, we conclude and discuss future directions in Section~\ref{sec:conclusion}.  

\section{Related Work}\label{sec:related}

Early efforts for complete verification of neural networks employed SMT and MILP solvers that enumerate activation patterns~\cite{katz2017reluplex}. A unified \bab view of verification was presented by ~\cite{bunel2020branch}. State-of-the-art complete verification tools including the SMT-based solvers~\cite{katz2019marabou,wu2024marabou} and GPU-accelerated bound-propagation-based tools~\cite{wang2021beta,xu2020automatic,zhang2022general,zhou2024scalable,shi2024genbab} can be viewed as instantiations of \bab. Branching heuristics have a significant impact on the runtime of complete verification tools. \textsc{BaBSR} introduced a ``strong-branching'' style score that solves a cheap surrogate relaxation for each candidate and became the default in many verifiers~\cite{bunel2020branch}. \textsc{FSB} (Filtered Smart Branching) refines this idea by first filtering candidates with BaBSR and then re-running a tighter bound computation on the shortlist~\cite{de2021scaling}. There has also been work on using Graph Neural Networks to train a policy for selecting splitting neurons~\cite{lu2019neural}, but such approaches require offline training and are typically tied to the distribution of training data, making them less directly comparable to general-purpose heuristics like ours. Our framework is complementary with learning-based heuristics, which could in principle be used as a preselect strategy within the lookahead template. Our work is inspired by \emph{lookahead} search in SAT and SMT solvers~\cite{heule2009lookahead} as well as MILP solvers~\cite{Glankwamdee2006Lookahead}, and we extend similar ideas to neural network verification.

\section{Background}\label{sec:prelim}
\subsection{Neural Network Verification}
For a trained deep neural network $N: \mathbb{R}^n \rightarrow \mathbb{R}^m$ with an input $x \in \mathbb{R}^n$ and output $y = N(x) \in \mathbb{R}^m$, the general DNN verification problem is whether or not there exists an input $x$ that produces an output $y$ that satisfies a property $\phi(y)$. If there exists an input $x$ that leads to an output $y$ that satisfies the property $\phi(y)$, then the problem is satisfiable (SAT); otherwise it is unsatisfiable (UNSAT).

\subsection{Bound Propagation}
To refine the search space of a neural network verification problem, bound propagation~\cite{singh2019abstract} estimates activation ranges at each layer, defining upper and lower bounds for each neuron. The Rectified Linear Unit (ReLU) activation function is defined as $\text{ReLU}(x) = \max(0, x)$. A ReLU neuron is considered active if its lower bound is strictly greater than zero ($\underline{z}^{(l)} > 0$), meaning its output will always be its input. Conversely, a ReLU neuron is inactive if its upper bound is less than or equal to zero ($\overline{z}^{(l)} \le 0$), meaning its output will always be zero. Otherwise, the activation status is not yet known and the ReLU in unstable. If a ReLU's bounds can guarantee the neuron is always active or inactive, unnecessary computations can be eliminated. Through this iterative process, the bounds on neuron activations are progressively tightened, leading the solver closer to a solution.

\subsection{Branch-and-bound}

\begin{figure}[t]
    \centering
    \includegraphics[width=0.8\columnwidth]{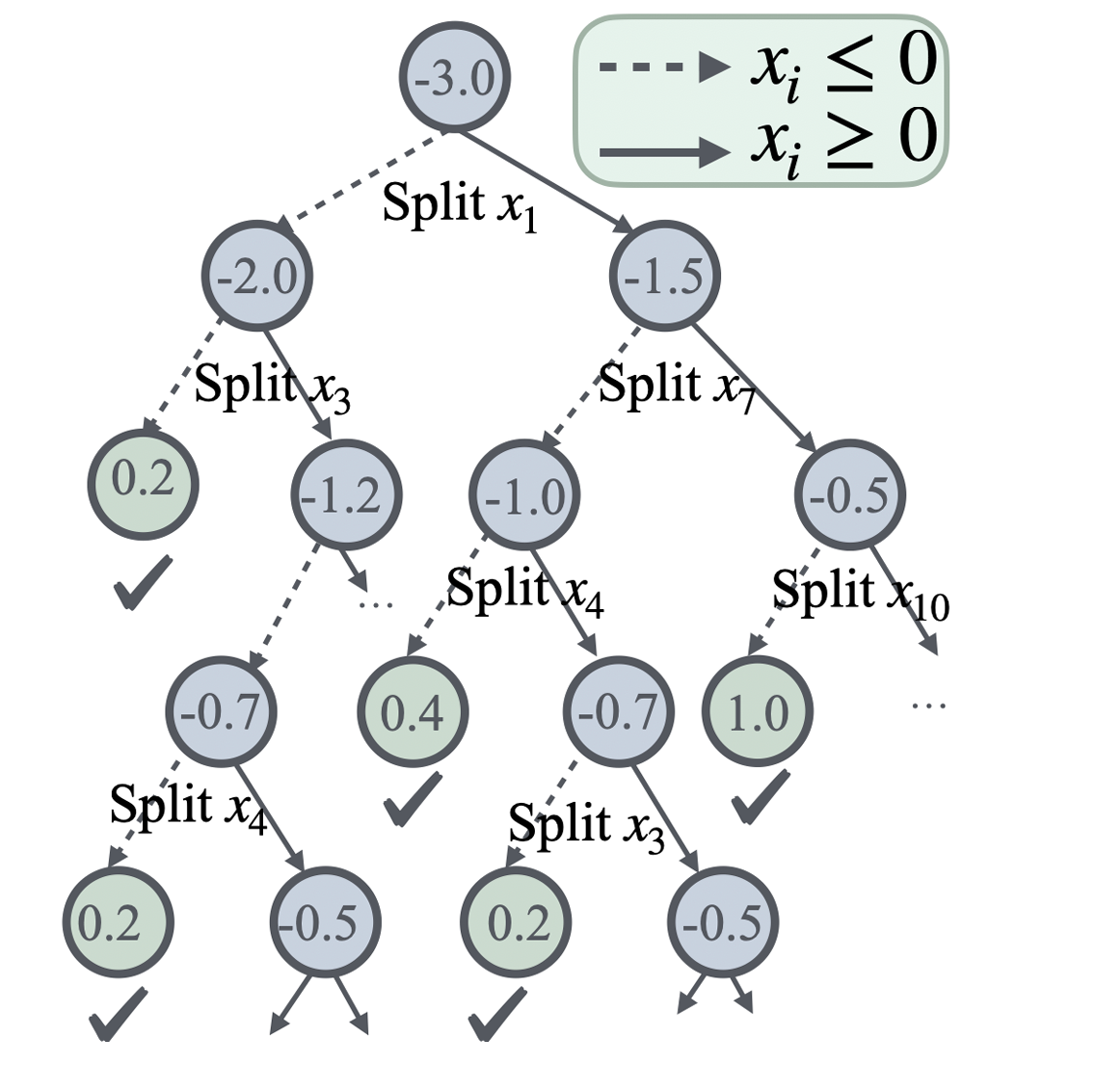}
    \caption{Each node encodes a BaB subproblem by splitting unstable ReLUs. Green nodes represent pruned nodes.~\protect\cite{zhou2024scalable}.}
    \label{fig:bab}
\end{figure}

The branch-and-bound (\bab) framework, illustrated by Figure \ref{fig:bab}, is an efficient approach to neural network verification. \bab systematically tightens the bounds of a neural network by splitting on unstable ReLU neurons. When an unstable ReLU is split, the problem is turned into two problems, one where the ReLU is active and one where the ReLU is inactive. To verify with \bab, ReLU splits are repeatedly applied until each subproblem can be definitively classified as either satisfying the property (SAT) or not satisfying it (UNSAT). 

The choice of ReLU to split on is imperative for the efficiency of \bab, as it determines how fast a solver converges to a solution; well-selected splits make significantly more progress to a solution than poorly-chosen splits. Thus, branching heuristics are essential to efficient verification.

\subsection{Branching Heuristics}
Several branching heuristics have been developed for \bab-based neural network verification. \textsc{BaBSR} is a deterministic heuristic that computes a surrogate relaxation of each neuron with the upper and lower bounds, and the biases of previous layers~\shortcite{bunel2020branch}. The unstable neuron with the highest score is then branched on. \textsc{Pseudo-impact branching} is a branching heuristic specific to \marabou. It estimates the impact a ReLU has on the Sum-of-Infeasibilities (SoI) of the network, calculated during \marabou's DeepSoI procedure~\cite{wu2022efficient}. The ReLU with the greatest estimated impact on the SoI is branched on. Pseudo-impact is presently the state-of-the-art heuristic implemented in \marabou. \textsc{FSB} pre-selects candidate ReLUs with BaBSR scores, and simulates one level of branching on each candidate before making a branching decision, and can be seen as a special instantiation of lookahead branching. FSB is presently the state-of-the-art heuristic implemented in \abcrown. We will discuss how lookahead extends FSB in section~\ref{sec:method}.

\section{Methodology}\label{sec:method}
In this section, we first present lookahead as a template algorithm and discuss its properties. We then discuss concrete strategies for instantiating the lookahead procedure. The algorithm begins by identifying a set of unstable neurons to simulate splits on. Given that performing a lookahead simulation on every unstable ReLU is computationally expensive, a preselect strategy is employed to shrink the candidate set. The preselect strategy differs based on implementation as detailed in Section~\ref{subsec:preselect}.

\begin{algorithm}[!t]
\caption{Lookahead Branching}
\label{alg:lookahead}
\begin{algorithmic}[1]
\STATE {\bfseries Input:} Set of unstable neurons $\mathcal{N}$ at the current search level
\STATE {\bfseries Output:} $\langle n^*, \mathcal{L}, \mathcal{U} \rangle$ where $n^*$ is the neuron to split, $\mathcal{L}$ and $\mathcal{U}$ are sound lower and upper bounds
\STATE {\bfseries Parameters:} lookahead depth $k$, preselect strategy $P$, heuristic $B$, scoring function $E$, aggregate strategy $C$
\STATE $\mathcal{N}' \gets P(\mathcal{N})$; $agScores \gets \{\}$; $\mathcal{L}, \mathcal{U} \gets [\,], [\,]$
\FOR{\textbf{each} $n \in \mathcal{N}'$}
    \STATE $scores', \ell, u \gets \text{CaseSplit}(n, k, \mathcal{N} \setminus \{n\})$
    \STATE $agScores[n] \gets C(scores')$; $\mathcal{L}.\text{append}(\ell)$; $\mathcal{U}.\text{append}(u)$
\ENDFOR
\STATE \textbf{return} $\arg \max (agScores),$
\STATE \hspace{2.15em} $\text{elementwise-max}(\mathcal{L}), \text{elementwise-min}(\mathcal{U})$
\STATE {\bfseries Procedure} \text{CaseSplit}($n, d, \mathcal{N}$)
\IF{$d = 0$}
    \STATE $score, \ell, u \gets E()$; \textbf{return} $[score], \ell, u$
\ELSE
    \STATE $\text{storeSolverState}()$; $scores, L, U \gets [\,], [\,], [\,]$
    \FOR{\textbf{each} $p \in \text{phases}(n)$}
        \STATE $\text{applySplit}(p)$; $\text{propagateBounds}()$
        \IF{$d = 1$} \STATE $n' \gets B(\mathcal{N})$ \ELSE \STATE $n' \gets nil$ \ENDIF
        \STATE $scores', \ell, u \gets \text{CaseSplit}(n', d - 1, \mathcal{N} \setminus \{n'\})$
        \STATE $scores \gets scores \mathbin{::} scores'$; $L.\text{append}(\ell)$; $U.\text{append}(u)$; $\text{restoreSolverState}()$
    \ENDFOR
    \STATE \textbf{return} $scores,$
    \STATE \hspace{2.15em} $\text{elementwise-min}(L), \text{elementwise-max}(U)$
\ENDIF
\end{algorithmic}
\end{algorithm}

For each preselected candidate, the \casesplit sub-routine simulates a case split on the neuron, creating one branch for each activation phase. For the ReLU activation function, there would be two phases (active and inactive). The algorithm then recursively explores the next level of the search tree by selecting another neuron to split on, using the \baseSelect heuristic. This process continues until a specified lookahead depth (\paramDepth) is reached. The goal of performing additional splits is to explore the cascading effects of each candidate split. After each split is simulated, the solver performs bound propagation. When the lookahead depth is reached, the \eval function is called to evaluate the current solver state and collect the current variable bounds. Importantly, the solver state is saved and restored after each simulated split to ensure that the lookahead simulation does not interfere with the actual search process. In the end, the \casesplit function returns the score of each simulated leaf, as well as the loosest lower and upper bounds of the variables discovered during the simulation.

The scores from all simulated branches are aggregated using the specified aggregation strategy to produce a single score for each candidate neuron. The neuron with the best score is selected as the next split. Moreover, the tightest lower and upper bounds discovered during lookahead are returned, which can be used to further prune the search space.

In general, lookahead can be computationally expensive, as it requires repeated simulation of the solving process. In practice, it is beneficial to invoke Algorithm~\ref{alg:lookahead} at the top of the search tree, before falling back to more efficient heuristics. The key advantage of lookahead is that it considers the cascading effects of a branching decision. A neuron that appears promising based on local information may lead to poor downstream progress, while a seemingly less promising candidate can yield improvements over several branching decisions.

FSB can be viewed as a particular implementation of the lookahead procedure where BaBSR is used as the \baseSelect heuristic, \paramDepth is 1, and the strategy is applied uniformly at every branching decision. This design prioritizes efficiency by minimizing per-decision overhead. But by simulating only one branching decision, FSB cannot fully capture the cascading effects that propagate through multiple levels of the search tree. Our work explores whether investing additional computation in critical branching decisions can improve verification performance. Specifically, we investigate depth-2 lookahead applied at the top of the search tree, where branching decisions have the largest downstream impact. This design is motivated by the strong branching literature in MILP~\cite{Glankwamdee2006Lookahead}, which demonstrates that expensive branching heuristics provide the greatest benefit early in search. Additionally, we explore how phase-fixing lemmas discovered during lookahead (Section~\ref{subsec:phase_fixing}) can further prune the search space. Section~\ref{sec:eval} evaluates whether these design choices yield net performance improvements despite their computational overhead.

\subsection{Preselect Strategies}
\label{subsec:preselect}
If lookahead were to be performed on every neuron in a large neural network, the computational overhead would outweigh the improvement in branch quality. Thus, it is essential to preselect a subset of neurons on which to run lookahead. Several preselect strategies are possible.

One approach is to use a polarity-based strategy, that uses a heuristic that scores neurons based on the sensitivity of their activation status to changes in their bounds. For example, a score such as $\max\left(\frac{ub}{lb}, \frac{lb}{ub}\right)$, where $ub$ and $lb$ are the upper and lower bounds of a neuron, can be used to identify neurons whose activation status is most undecided. Neurons with the highest scores are then selected for lookahead. We used this polarity-based preselect strategy in \marabou. Another approach is to use existing branching scores, such as BaBSR scores, to rank neurons. In this case, a fixed number of neurons with the highest scores are chosen for lookahead. This preselect strategy was used in \abcrown. Regardless of the strategy, the preselect method should identify promising candidates using lightweight metrics.

\subsection{Scoring Functions}
\label{subsec:scoring}

The scoring function is a central component of lookahead branching, as it quantifies the effectiveness of each simulated split and guides the branching decision. Two general classes of scoring metrics are commonly used: neuron-fixing-based metrics and bound-reduction-based metrics.

\paragraph{Neuron-fixing-based metric.}
This approach evaluates a split by counting how many previously unstable neurons become phase-fixed (i.e., their activation status is determined) after bound propagation in each branch. To encourage both high total progress and balanced outcomes, a balance score is computed for each split. For example, if a split results in $a$ neurons fixed in one branch and $b$ in the other, the score can be defined as $\frac{a \times b}{a + b + 1}$. This formula favors splits that both fix many neurons and distribute the fixes evenly between branches. Consider a split that fixes 9 neurons in one branch and 1 in the other. Its score would be $\frac{9 \times 1}{9 + 1 + 1} \approx 0.8$. In contrast, a split that fixes 5 neurons in each branch would yield a score of $\frac{5 \times 5}{5 + 5 + 1} \approx 2.3$. Our metric would then favor the more balanced outcome in case the same number of neurons are fixed. When lookahead is performed to greater depths, the scores are computed recursively: at the leaves, the score is based on the number of phase fixes, and at each internal node, the balance formula is applied to the scores of its child branches, propagating upward to yield a final score. This polarity-based metric was used in \marabou.

\paragraph{Bound-reduction-based metric.}
This metric is based on the reduction in variable bounds or other continuous measures of progress, such as the width of neuron bounds or their proximity to a decision threshold. For example, a scoring function may combine the width of a neuron's bounds, its distance from zero, and an estimate of its influence on the objective (e.g., via gradient approximation). In a lookahead setting, after simulating a split, the scoring function can aggregate the scores from subsequent branches using a balance formula similar to the neuron-fixing metric. For instance, if $S^+$ and $S^-$ are the scores from the two branches after a split, the lookahead score can be defined as $S_0 + \lambda \cdot \frac{S^+ \times S^-}{S^+ + S^- + 1}$, where $S_0$ is the immediate score for the split and $\lambda$ is a discount factor to control the influence of deeper lookahead. For deeper lookahead, this aggregation is applied recursively as scores are propagated up the lookahead tree. This bound-reduction-based metric was used in \abcrown.

Regardless of the specific metric, an effective scoring function should capture both the potential for maximal progress toward a solution and the balance of outcomes across different branches. This ensures that the branching decision not only accelerates convergence but also avoids highly imbalanced splits that may lead to inefficient search.

\subsection{Phase Fixing via Lookahead Splits}
\label{subsec:phase_fixing}

One significant outcome of the lookahead branching procedure is its ability to refine variable bounds, which can lead to phase fixing of previously unstable neurons. Specifically, during the lookahead process, bound propagation is applied after simulating splits, and the resulting bounds can sometimes determine that a neuron is stable (i.e., its phase is fixed). The following theorem formalizes this effect:

\begin{theorem}[Phase Fixing via Lookahead]
Let $z = \mathrm{ReLU}(y)$ be an unstable ReLU, i.e., $\ell_y < 0 < u_y$. Suppose we simulate a split on another unstable ReLU $z_r = \mathrm{ReLU}(y_r)$, generating two subproblems corresponding to the inactive ($y_r \leq 0$) and active ($y_r \geq 0$) cases.

Let $\ell_y^{\mathrm{inact}}$ and $\ell_y^{\mathrm{act}}$ be the lower bounds on $y$ obtained in each subproblem. Then the refined lower bound $\ell_y^{\mathrm{new}} := \min(\ell_y^{\mathrm{inact}}, \ell_y^{\mathrm{act}})$ satisfies $\ell_y^{\mathrm{new}} \geq \ell_y$. Similarly, the refined upper bound $u_y^{\mathrm{new}} := \max(u_y^{\mathrm{inact}}, u_y^{\mathrm{act}})$ satisfies $u_y^{\mathrm{new}} \leq u_y$.

As a consequence: if $\ell_y^{\mathrm{new}} \geq 0$, the phase of $z$ is fixed to \emph{active}; if $u_y^{\mathrm{new}} \leq 0$, it is fixed to \emph{inactive}.
\end{theorem}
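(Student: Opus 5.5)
The proof rests on two facts. First, the split is exhaustive: every feasible point of the current subproblem satisfies either $y_r \le 0$ or $y_r \ge 0$. Second, bound propagation is monotone: adding a constraint can only shrink the region over which bounds are computed, so the bounds it returns can only tighten. I will state this second fact as an assumption about the bound-propagation procedure. It holds for \marabou's interval/symbolic propagation, and for the (possibly optimized) linear relaxations in \abcrown when the pre-split bounds are used as the starting point of propagation in each child. Concretely, the child's relaxation of every neuron is at least as tight as the parent's.

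\emph{Monotonicity.} Let $R$ be the relaxed region (the set of variable assignments consistent with the current constraints) from which $\ell_y, u_y$ are derived. Let $R^{\mathrm{inact}}$ and $R^{\mathrm{act}}$ be the regions for the two subproblems. Because each child adds a constraint and starts from the parent's bounds, $R^{\mathrm{inact}}, R^{\mathrm{act}} \subseteq R$. Any valid bound procedure that starts from the parent's bounds and only intersects can then be taken to give $\ell_y^{\mathrm{inact}} \ge \ell_y$ and $\ell_y^{\mathrm{act}} \ge \ell_y$. In an implementation this is enforced directly, by setting each child bound to the maximum of its freshly computed value and the inherited one. Taking the minimum of the two child bounds gives $\ell_y^{\mathrm{new}} \ge \ell_y$. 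The symmetric argument with max gives $u_y^{\mathrm{new}} \le u_y$.

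\emph{Soundness of the refined bounds.} This is the substantive part, since the phase-fixing conclusion needs $\ell_y^{\mathrm{new}}$ to be a valid bound for the original problem, not merely a larger number. Take any concrete input $x$ feasible for the current subproblem. Its pre-activation value $y_r(x)$ satisfies $y_r(x) \le 0$ or $y_r(x) \ge 0$, so $x$ is feasible for at least one child. Soundness of bound propagation in that child gives $y(x) \ge \ell_y^{\mathrm{inact}}$ or $y(x) \ge \ell_y^{\mathrm{act}}$, and in either case $y(x) \ge \ell_y^{\mathrm{new}}$. Likewise $y(x) \le u_y^{\mathrm{new}}$. So $[\ell_y^{\mathrm{new}}, u_y^{\mathrm{new}}]$ encloses $y$ over all feasible inputs.

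\emph{Phase fixing.} If $\ell_y^{\mathrm{new}} \ge 0$, then $y(x) \ge 0$ for every feasible $x$, so $z = \mathrm{ReLU}(y) = y$ and the neuron behaves as active. The boundary case $y = 0$ is harmless, because both phases agree there. If $u_y^{\mathrm{new}} \le 0$, then $z = 0$, i.e.\ the neuron is inactive, consistent with the convention in the background section. These lemmas are therefore sound to add to the parent node.

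The main obstacle is the monotonicity claim, because some relaxations, such as $\alpha$-CROWN with re-optimized slopes, are not automatically monotone under adding constraints. I would handle this by making the intersection with the parent's bounds explicit, which Algorithm~\ref{alg:lookahead} effectively does through its elementwise-max/min bookkeeping. Without that step, the inequality $\ell_y^{\mathrm{new}} \ge \ell_y$ may fail, but the soundness part, and hence the phase-fixing consequence, still holds.
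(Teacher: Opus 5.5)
Your proof is correct and takes the same route as the paper's sketch: the split covers the parent's region, and each child's bounds are sound and no looser than the parent's, so the looser of the two child bounds is valid and still at least as tight as the original. It is more careful than the sketch, which claims the \emph{maximum} lower bound across the two subproblems is valid (right only when combining different candidate splits, as at the top level of Algorithm~\ref{alg:lookahead}) and fuses monotonicity with soundness, whereas you correctly isolate soundness as all that phase fixing needs, though two side remarks should be dropped: $\alpha,\beta$-CROWN propagation is not monotone in general (the paper itself notes CROWN-style bounds can loosen when pre-activation bounds tighten), and the elementwise max/min in Algorithm~\ref{alg:lookahead} combines bounds across phases and candidates rather than intersecting a child's bounds with the parent's, neither remark being load-bearing given your explicit clamping.
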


\begin{proof}[Proof Sketch]
Simulating a split on $z_r$ refines the input domain into two disjoint subdomains. Bound propagation on each subproblem yields tighter constraints on all dependent variables, including $y$. Because the union of the subdomains recovers the full feasible region, the maximum lower bound and minimum upper bound across the subproblems provide valid global bounds.
\end{proof}

The phase-fixing capability cannot be used when either CROWN or \acrown bound propagation is used, as the adaptive ReLU rule means bounds can become looser when pre-activation bounds are tightened. Therefore we used phase-fixing in \marabou but not \abcrown.

\section{Evaluation}\label{sec:eval}
To evaluate the effectiveness of lookahead branching, we implemented Algorithm~\ref{alg:lookahead} in two state-of-the-art \bab-based verification tools \marabou and \abcrown. The two verifiers employ different paradigms: \marabou is a CPU-based verifier that employs an SMT-based incomplete decision procedure, while \abcrown runs GPU-accelerated bound-propagation. We evaluate lookahead branching against the state-of-the-art heuristics in each verifier on various benchmarks, investigating whether lookahead branching can improve the branch-and-bound across distinct solver paradigms.

\subsection{Case Study on \marabou}

\subsubsection{Experimental Setup}
For experimentation on Marabou, we compared lookahead branching to BaBSR and pseudo-impact, two of Marabou's existing heuristics. BaBSR is a widely-used deterministic heuristic~\cite{bunel2020branch}, while pseudo-impact is a dynamic heuristic specific to \marabou~\cite{wu2022efficient}. We conducted experiments on three different benchmark sets, NAP, NN4Sys, and MNIST. NAP is a benchmark designed to evaluate neural network verifiers on specifications defined by neural activation patterns, which characterize broad, numerically challenging regions of the input space. NN4Sys is a benchmark suite constructed from neural networks used in computer systems, testing real-world verification challenges that have been proven difficult in recent iterations of the VNN Competition. The MNIST dataset includes various feed forward neural networks for handwritten digit recognition. The MNIST network architectures we tested include, in layers by neurons, 20x20, 2x256, 4x256, and 6x256. We implemented lookahead on top of the most recent version of \marabou.

We use the polarity-based pre-selecting strategy as described in Section~\ref{subsec:preselect} to select 10 candidate neurons. We use a \paramDepth of 2, and instantiate \baseSelect with two different heuristics, BaBSR and pseudo-impact. In the experiments, we perform lookahead splits on the top five branching levels, and fall back to the base heuristics for the rest of it. We used the neuron-fixed-based metric for evaluating a search state and combining the scores across subproblems~\ref{subsec:scoring}. The experiments were run on a server with 2.6-GHz AMD CPUs, with 4 CPU cores allocated per benchmark. Each benchmark was given a 30 minute time limit and a 32 GB memory limit. 

\subsubsection{Results}

Table \ref{tab:marabou} presents a comparison of the performance of Marabou with and without lookahead branching across the various benchmarks. The tables report the number of instances solved within the time limit (1800 seconds), along with average run time on solved instances. Overall, lookahead branching results in a substantial increase in the number of solved instances for both BaBSR and pseudo-impact heuristics.

\begin{table}[t]
    \centering
    \setlength{\tabcolsep}{1.8pt} 
    \renewcommand{\arraystretch}{1.2}
    \resizebox{\columnwidth}{!}{%
        \footnotesize 
        \begin{tabular}{l c cc cc cc cc}
            \toprule
            \textbf{Benchmark} & \textbf{\#} & \multicolumn{2}{c}{\textbf{babsr}} & \multicolumn{2}{c}{\textbf{babsr+lh}} & \multicolumn{2}{c}{\textbf{p-i}} & \multicolumn{2}{c}{\textbf{p-i+lh}} \\
            \cmidrule(lr){3-4} \cmidrule(lr){5-6} \cmidrule(lr){7-8} \cmidrule(lr){9-10}
             & \makecell{\textbf{Bench.}} & \makecell{Sol.} & \makecell{Time} & \makecell{Sol.} & \makecell{Time} & \makecell{Sol.} & \makecell{Time} & \makecell{Sol.} & \makecell{Time} \\
            \midrule
            NAP          & 235 & 30  & 29.0 & \textbf{47}  & 3.7  & 19  & 0.5  & \textbf{24} & 3.0 \\
            NN4SYS       & 120 & \textbf{80}  & 127.1 & \textbf{80}  & 114.9 & 82  & 149.9 & \textbf{94} & 238.1 \\
            MNIST 20x20  & 500 & 183 & 100.5 & \textbf{192} & 130.6 & 130 & 14.9 & \textbf{141} & 9.3 \\
            MNIST 2x256  & 500 & \textbf{370} & 28.3 & 369 & 30.3 & 405 & 77.9 & \textbf{407} & 83.1 \\
            MNIST 4x256  & 500 & 281 & 30.1 & \textbf{286} & 30.5 & \textbf{298} & 72.5 & 295 & 52.2 \\
            MNIST 6x256  & 500 & 248 & 86.1 & \textbf{254} & 94.0 & 240 & 42.5 & \textbf{240} & 40.9 \\
            \bottomrule
        \end{tabular}%
    }
    \caption{Marabou results on various benchmarks. P-I denotes pseudo-impact, LH denotes lookahead. Time is in seconds.}
    \label{tab:marabou}
\end{table}

\begin{figure*}[!t]
    \centering
    \begin{subfigure}{0.3\textwidth}
        \centering
        \includegraphics[width=\linewidth]{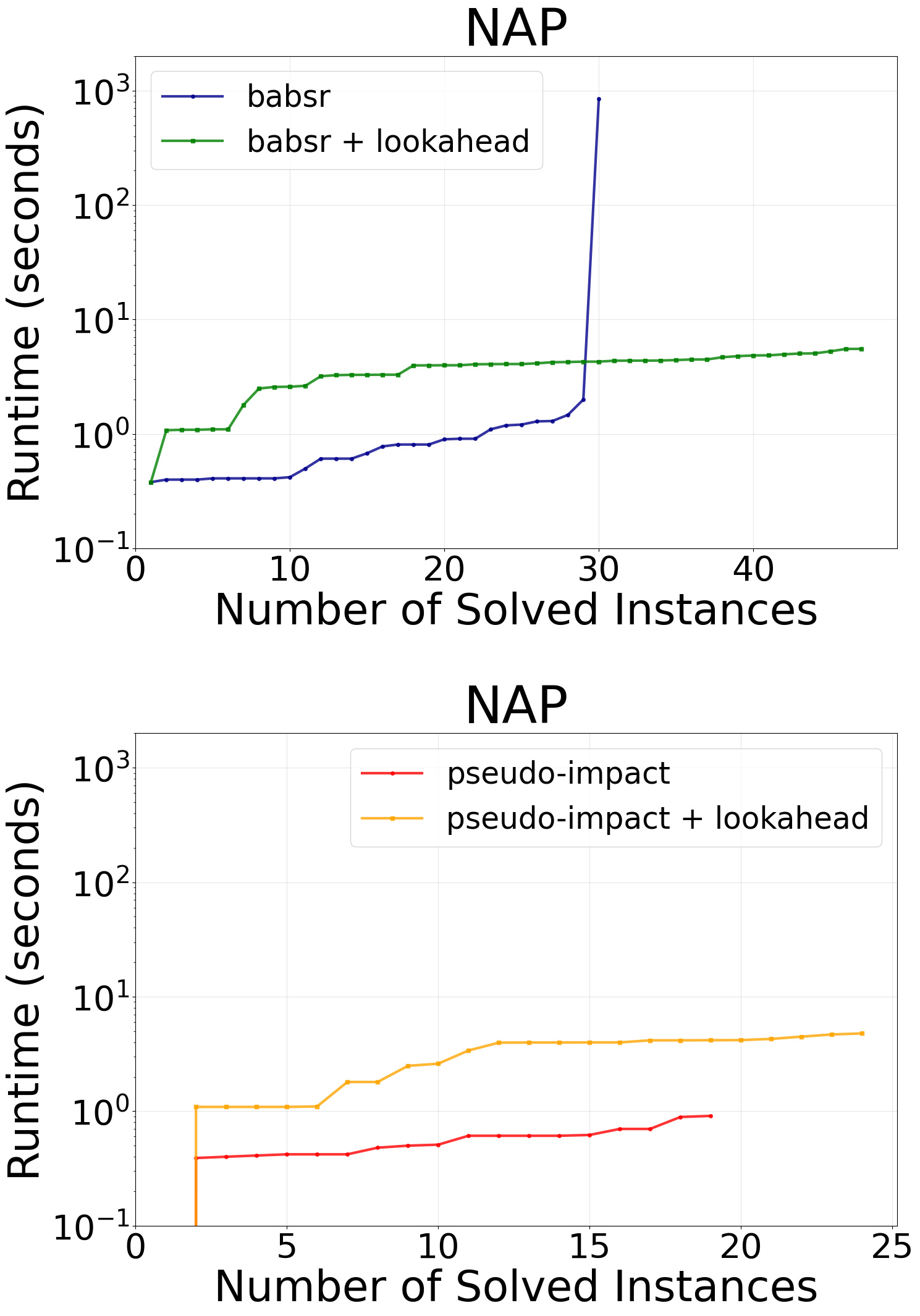}
        \caption{NAP}
        \label{fig:marabou_cactus_nap}
    \end{subfigure}
    \hfill
    \begin{subfigure}{0.3\textwidth}
        \centering
        \includegraphics[width=\linewidth]{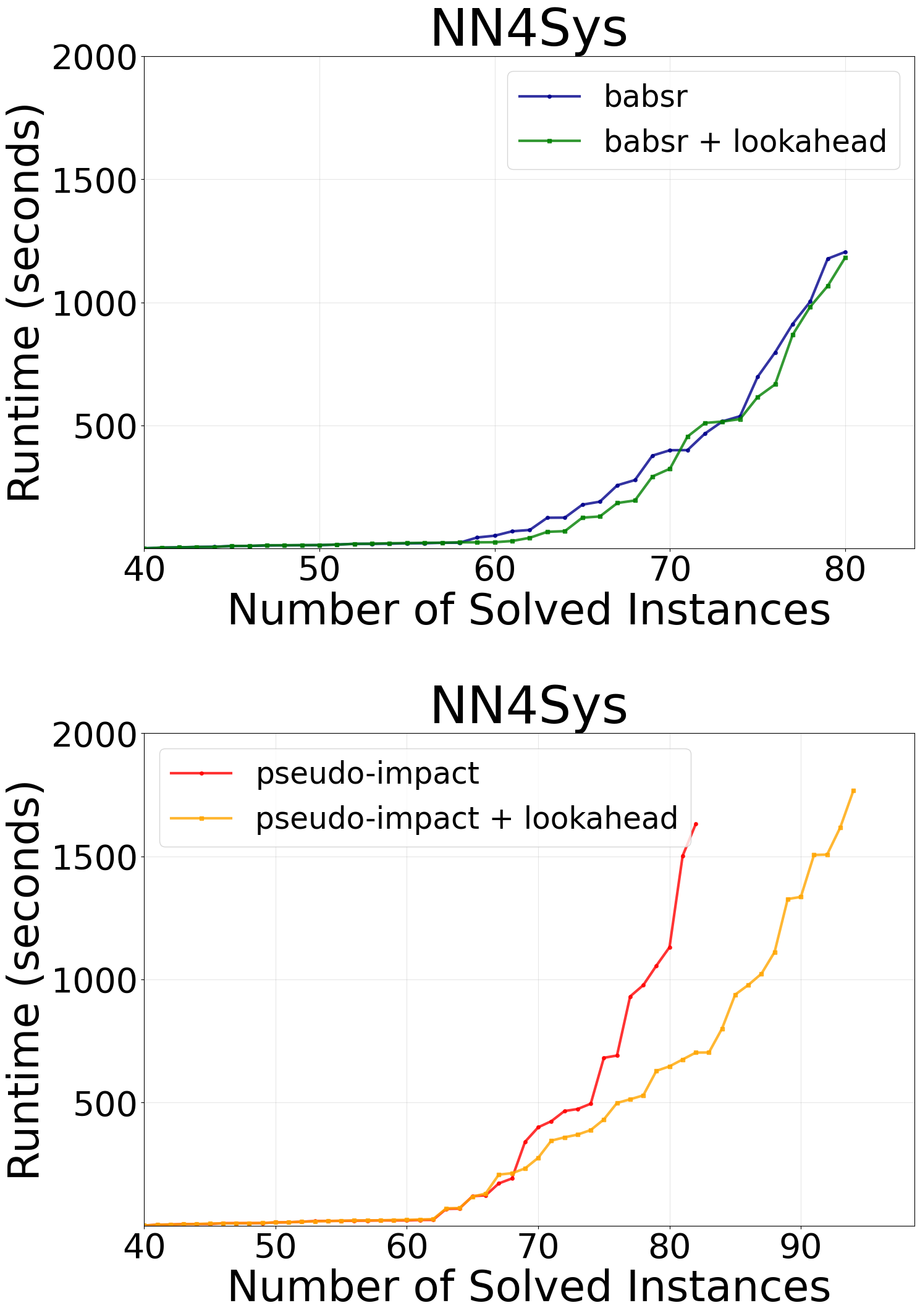}
        \caption{NN4Sys}
        \label{fig:marabou_cactus_nn4sys}
    \end{subfigure}
    \hfill
    \begin{subfigure}{0.3\textwidth}
        \centering
        \includegraphics[width=\linewidth]{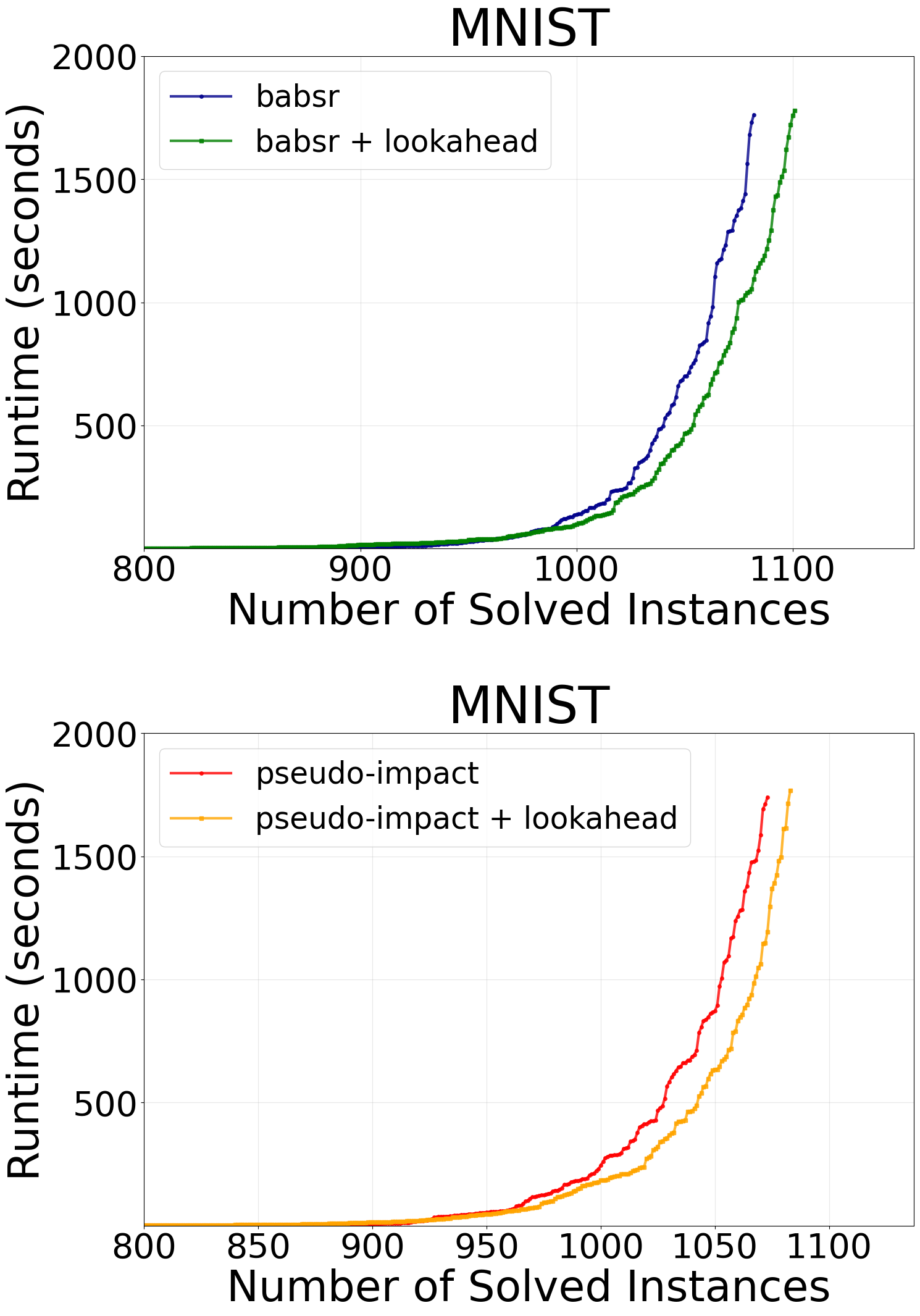}
        \caption{MNIST}
        \label{fig:marabou_cactus_mnist}
    \end{subfigure}
    \caption{Cactus plots comparing existing heuristics and lookahead on \marabou.}
    \label{fig:marabou_cactus}
\end{figure*}

Figure~\ref{fig:marabou_cactus} provides cactus plots comparing lookahead against BaBSR and pseudo-impact on the three benchmark sets. Overall, lookahead leads to more solved instances compared to the baseline heuristics, especially at higher time limits. On NAP, overhead makes lookahead slower at low time limits, but branching quality gains dominate overall. On NN4Sys, lookahead yields significantly more solved instances with pseudo-impact and solve time improvements with BaBSR. For MNIST, lookahead solves more instances at higher time limits. Table~\ref{tab:lookahead-overhead} compares lookahead's overhead an easy and hard instance. On the easy instance, the overhead dominates. On the hard instance, the overhead constitutes a small part of overall runtime and the improved branching yields a substantial speedup.

\begin{table}[h]
\centering
\setlength{\tabcolsep}{3pt}
\renewcommand{\arraystretch}{1.2}
\resizebox{\columnwidth}{!}{%
    \footnotesize
    \begin{tabular}{lcccc}
    \toprule
    \textbf{Instance} & \textbf{Baseline} & \textbf{LH} & \textbf{LH Phase (s)} & \textbf{Overhead} \\
    \midrule
    \textsc{NAP cls0\_id41}      & 0.22   & 2.48  & 2.17 & 87.5\% \\
    \textsc{NN4Sys lindex\_1\_5} & 537.62 & 66.14 & 7.01 & 10.6\% \\
    \bottomrule
    \end{tabular}%
}
\caption{Lookahead overhead on easy vs.\ hard instances, time in seconds.}
\label{tab:lookahead-overhead}
\end{table}

\subsubsection{Ablation Studies on \marabou}\label{sec:marabou_ablation}

We performed additional ablation studies on the NAP and NN4Sys benchmarks by varying the lookahead depth (D), number of preselected candidates (C), and removing phase fixing (PF). All configurations use the polarity-based preselect strategy and BaBSR as the base select heuristic.

\begin{table}[t]
    \centering
    \setlength{\tabcolsep}{7pt}
    \resizebox{\columnwidth}{!}{%
        \small
        \begin{tabular}{@{}l rr rr@{}}
            \toprule
            & \multicolumn{2}{c}{\textbf{NAP}} & \multicolumn{2}{c}{\textbf{NN4Sys}} \\
            \cmidrule(lr){2-3}\cmidrule(lr){4-5}
            \textbf{Configuration} & Solved & Time (s) & Solved & Time (s) \\
            \midrule
            LH D5C10         & \textbf{47} & 3.7  & 80 & 114.9 \\
            LH D1C10         & 36 & 1.5   & 80 & 118.8 \\
            LH D4C10         & 36 & 1.6   & 80 & 121.97 \\
            LH D6C10         & 36 & 1.59  & 80 & 124.73 \\
            LH D8C10         & 36 & 1.48  & 80 & 121.93 \\
            LH D10C10        & 45 & 5.1   & 79 & 120.1 \\
            LH D5C5          & \textbf{47} & 2.1  & 80 & 118.5 \\
            LH D5C20         & 39 & 6.6   & 80 & 121.9 \\
            LH D5C10 no PF   & \textbf{47} & 3.3  & 80 & 147.6 \\
            \bottomrule
        \end{tabular}%
    }
    \caption{Ablation studies on \marabou with BaBSR. D denotes lookahead depth, C denotes preselected candidates, PF denotes phase fixing.}
    \label{tab:marabou_ablation}
\end{table}

On NAP, all lookahead configurations solve between 36 and 47 instances, compared to 30 for BaBSR. On NN4Sys, varying depth and candidate count has minimal effect on performance, with the exception of the D10C10 configuration, where increased branching overhead reduces performance. Comparing lookahead with and without phase fixing shows that phase fixing has no effect on NAP but yields a 33-second improvement on NN4Sys.

\subsection{Case Study on \abcrown}

\subsubsection{Experimental Setup}
For experimentation on \abcrown, we compared lookahead branching to its current state-of-the-art heuristic, FSB~\cite{de2021scaling}, on seven different convolutional neural networks ranging with various robustness properties for each. The networks came from the CIFAR, MNIST, and TinyImageNet datasets, three computer vision datasets for image and text recognition. The networks from the CIFAR and TinyImageNet datasets range between 14.4 million and 31.6 million parameters, making them challenging verification queries. The MNIST benchmarks were adversarially trained, introducing complexity that makes formal verification particularly challenging. We implemented lookahead in the most recent version of \abcrown ~\cite{zhou2025clipandverify}. 

With \abcrown, the solver first attempted to solve the problem with an adversarial attack algorithm~\cite{zhang22babattack}, then with incomplete verification using auto-LiRPA~\cite{xu2021fast} before beginning complete verification with \bab. When \bab is run, the first five \bab rounds are done using lookahead branching with a \paramDepth of 2, and then FSB branching is used. We use BaBSR as the preselect strategy and select 15 candidate neurons.  We used the bound-reduction-based metric for the scoring function (Section~\ref{subsec:scoring}) with a discount factor $\lambda$ of 0.5. We did not implement the phase fixing techniques in \abcrown, as \abcrown leverages \acrown for bound propagation and thus phase fixing is not sound. The experiments were run on a server with 2.6-GHz AMD CPUs and A100 GPUs. One A100 GPU and 96 CPU cores were allocated for each experiment, and a 128 GB memory limit was given for each experiment.

\subsubsection{Results}

\begin{table}[t]
    \centering
    \setlength{\tabcolsep}{1.5pt}
    \renewcommand{\arraystretch}{1.2}
    \resizebox{\columnwidth}{!}{%
        \footnotesize
        \begin{tabular}{ll c c cc cc}
            \toprule
            & & & \makecell{\textbf{T/O}} & \multicolumn{2}{c}{\textbf{Solved}} & \multicolumn{2}{c}{\textbf{Avg Time (s)}} \\
            \cmidrule(lr){5-6} \cmidrule(lr){7-8}
            \textbf{Dataset} & \textbf{Model} & \textbf{\#} & \textbf{(s)} & FSB & LH & FSB & LH \\
            \midrule
            CIFAR & CIFAR100 & 200 & 360 & 112 & 112 & 19.49 & \textbf{16.47} \\
                  & CIFAR10-ResNet & 72 & 360 & 59 & \textbf{60} & 22.83 & \textbf{21.87} \\
                  & CNN-A-Mix & 200 & 200 & 83 & 83 & 7.58 & \textbf{6.03} \\
                  & CNN-B-Adv & 200 & 450 & 93 & 93 & 10.23 & \textbf{8.60} \\
            \midrule
            MNIST & CNN-A-Adv & 200 & 200 & 141 & 141 & 11.34 & \textbf{8.90} \\
            \midrule
            TinyImageNet & tinyimagenet & 200 & 360 & 129 & \textbf{130} & 23.84 & \textbf{20.83} \\
            \bottomrule
        \end{tabular}%
    }
    \caption{\abcrown results on various benchmarks. T/O denotes timeout, LH denotes lookahead.}
    \label{tab:abcrown}
\end{table}

\begin{figure*}[!t]
    \centering
    \includegraphics[width=0.9\textwidth]{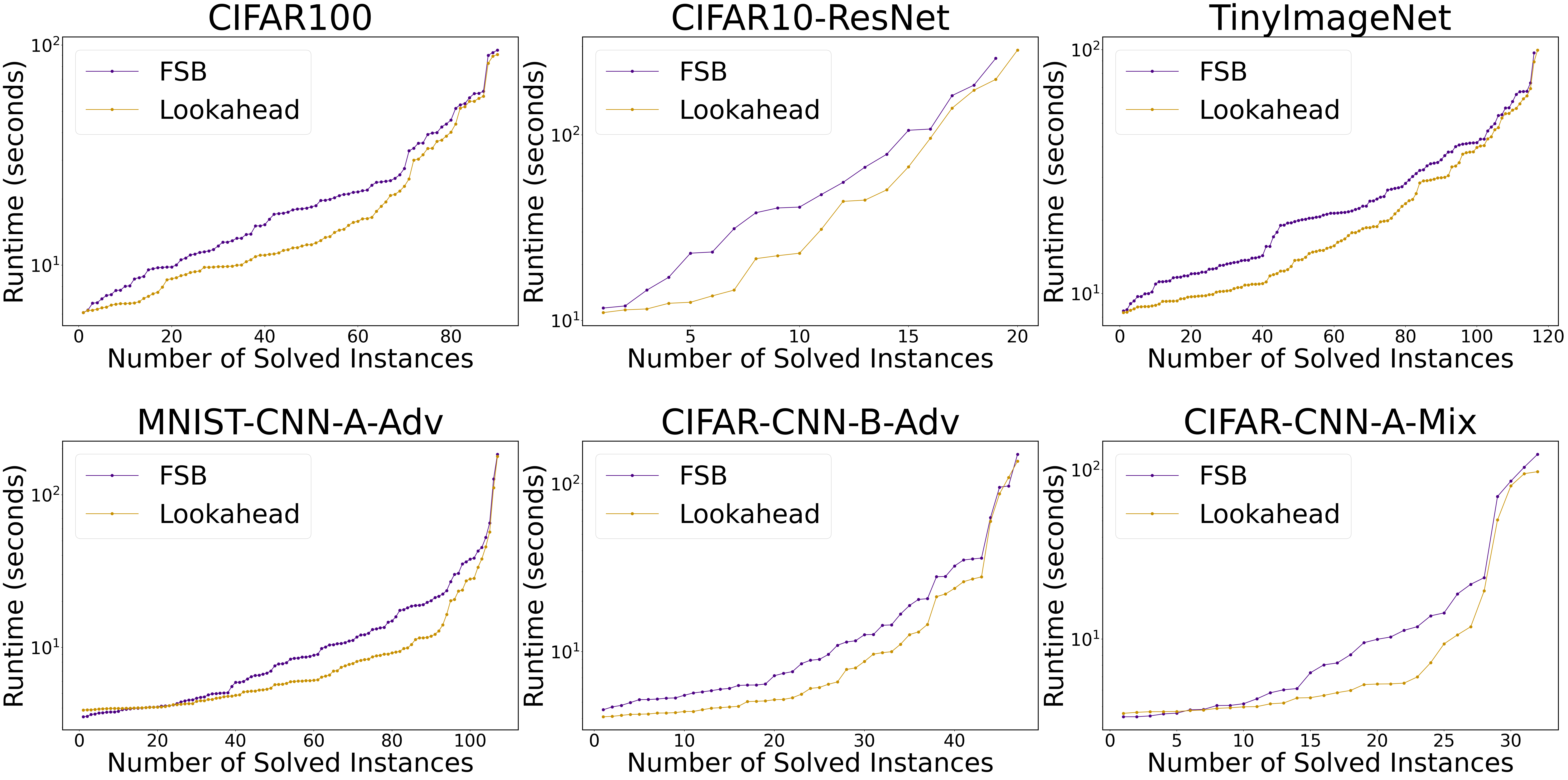}
    \caption{Cactus plots comparing FSB and lookahead on \abcrown.}
    \label{fig:betacrown_cactus}
\end{figure*}

Table \ref{tab:abcrown} presents a comprehensive overview of the verification performance on the seven neural network models using both FSB branching and lookahead branching within \abcrown. The table compares the total number of solved instances (including both SAT and UNSAT) and the average runtime on solved instances. To isolate the impact of lookahead branching on the branch-and-bound core of \abcrown, Table \ref{tab:abcrown_unsat} presents results filtered to include only instances that required the full \bab procedure. SAT instances solved through adversarial attacks and easier UNSAT instances solved with incomplete verification are excluded. We see that lookahead leads to a consistent speedup in solve time, and contributes two unique solutions.

\begin{table}[t]
    \centering
    \setlength{\tabcolsep}{2.5pt}
    \renewcommand{\arraystretch}{1.2}
    \resizebox{\columnwidth}{!}{%
        \footnotesize
        \begin{tabular}{ll cc cc c}
            \toprule
            & & \multicolumn{2}{c}{\textbf{Solved}} & \multicolumn{2}{c}{\textbf{Avg Time (s)}} & \makecell{\textbf{Avg}\\ \textbf{Speedup}} \\
            \cmidrule(lr){3-4} \cmidrule(lr){5-6}
            \textbf{Dataset} & \textbf{Model} & FSB & LH & FSB & LH & \textbf{(\%)} \\
            \midrule
            CIFAR & CIFAR100 & 90 & 90 & 23.34 & \textbf{19.58} & 16.1 \\
                  & CIFAR10-ResNet & 19 & \textbf{20} & 69.51 & \textbf{64.27} & 24.3 \\
                  & CNN-A-Mix & 32 & 32 & 19.10 & \textbf{15.08} & 15.4 \\
                  & CNN-B-Adv & 47 & 47 & 19.77 & \textbf{16.54} & 21.9 \\
            \midrule
            MNIST & CNN-A-Adv & 107 & 107 & 14.77 & \textbf{11.53} & 16.3 \\
            \midrule
            TinyImageNet & tinyimagenet & 116 & \textbf{117} & 26.11 & \textbf{22.74} & 21.0 \\
            \bottomrule
        \end{tabular}%
    }
    \caption{\abcrown results on UNSAT instances solved with \bab. LH denotes lookahead, timeouts remain the same as table~\ref{tab:abcrown}.}
    \label{tab:abcrown_unsat}
\end{table}

\subsubsection{Ablation Studies on \abcrown}
\label{sec:abcrown_ablation}

\begin{table}[t]
    \centering
    \setlength{\tabcolsep}{2pt}
    \renewcommand{\arraystretch}{1.2}
    \resizebox{\columnwidth}{!}{%
        \footnotesize
        \begin{tabular}{l cc cc cc}
            \toprule
            & \multicolumn{2}{c}{\textbf{1 LH Branch}} 
            & \multicolumn{2}{c}{\textbf{5 LH Branches}} 
            & \multicolumn{2}{c}{\textbf{10 LH Branches}} \\
            \cmidrule(lr){2-3} \cmidrule(lr){4-5} \cmidrule(lr){6-7}
            \textbf{Model} 
            & \makecell{Solved} & \makecell{Time} 
            & \makecell{Solved} & \makecell{Time} 
            & \makecell{Solved} & \makecell{Time} \\
            \midrule
            CIFAR100       & 90 & 64.21 & 90 & 64.27 & 90 & \textbf{57.25} \\
            CIFAR10-ResNet & 20 & 20.40 & 20 & \textbf{19.58} & 20 & 21.49 \\
            CNN-A-Mix      & 32 & 16.15 & 32 & 15.08 & 32 & \textbf{14.99} \\
            CNN-B-Adv      & 47 & 17.24 & 47 & \textbf{16.54} & 47 & 16.86 \\
            CNN-A-Adv      & 107 & 11.89 & 107 & \textbf{11.53} & 107 & 13.65 \\
            tinyimagenet   & 117 & 24.66 & 117 & \textbf{22.74} & 117 & 24.93\\
            \bottomrule
        \end{tabular}%
    }
    \caption{\abcrown results with varying number of lookahead branches. LH denotes lookahead. Time is in seconds.}
    \label{tab:abcrown_ablation}
\end{table}

Table \ref{tab:abcrown_ablation} presents the performance of \abcrown while varying the number of lookahead branches performed before switching to FSB. Across all hyperparameters tested, lookahead either solves more instances or has faster solve times than FSB. Changing the number of lookahead branches does not change the number of solved instances and changes the solve time minimally.

Overall, we found that lookahead can boost the performance of \bab in two fundamentally different neural network verifiers. This suggests that lookahead branching can be a generally applicable strategy for enhancing the performance of complete neural network verification.

\section{Conclusion}\label{sec:conclusion}
In this paper, we introduced a general lookahead branching strategy for neural network verification. The key idea is to simulate candidate splits to make more informed branching decisions. We discussed design choices of lookahead and instantiated lookahead branching for two distinct complete verifiers, \marabou and \abcrown. Our results show that using lookahead branching results in substantial performance gains in both solvers across a wide range of benchmarks. 

\paragraph{Limitations.} As we presented lookahead as a template algorithm, its design space is massive. While we considered two instantiations in \marabou and one in \abcrown, it would be interesting to evaluate more variants of lookahead. In particular, we plan to investigate deeper lookahead simulations or periodically invoking lookahead later in the search. In addition, while we explored leveraging lookahead to fix unstable neurons, it might be interesting to leverage other information, such as dependencies between neurons, which might result in additional pruning of the search space.

\newpage
\section*{Acknowledgments}

This work made use of high-performance computing equipment funded by the National Science Foundation under Grant \#2117377. Wu's work is supported by a gift from VMware University Research Fund. Additional support was provided by the Stanford Center for Automated Reasoning (Centaur). Huan Zhang acknowledges the support from the Schmidt Sciences AI2050 program and the National Science Foundation under Grant \#2331967.

%% The file named.bst is a bibliography style file for BibTeX 0.99c
\bibliographystyle{named}
\bibliography{ijcai26}

\end{document}